\newcommand{\argmin}{\operatornamewithlimits{argmin}}
\newcommand{\bx}{\boldsymbol{x}}
\newcommand{\bw}{{\boldsymbol{w}}}
\newcommand{\bc}{{\boldsymbol{c}}}
\newcommand{\bu}{{\boldsymbol{u}}}
\newcommand{\R}{{\mathbb{R}}}
\newcommand{\nys}{{{Nystr\"{o}m }}}
\newcommand{\Nys}{{{Nystr\"{o}m }}}
\newtheorem{theorem}{Theorem}[section]
\begin{document}
\title{Nonlinear Online Learning with Adaptive \Nys Approximation}

\author{Si Si}
\affiliation{
  \institution{Google Research}
    \city{Mountain View} 
  \state{CA} 
  \postcode{94043}
  }
\email{sisidaisy@google.com}

\author{Sanjiv Kumar}
\affiliation{
  \institution{Google Research}
    \city{New York} 
  \state{NY} 
  \postcode{94043}
  }
\email{sanjivk@google.com}

\author{Yang Li}
\affiliation{
  \institution{Google Research}
    \city{Mountain View} 
  \state{CA} 
  \postcode{94043}
  }
\email{liyang@google.com}

\begin{abstract}
Use of nonlinear feature maps via kernel approximation has led to success in many online learning tasks. As a popular kernel approximation method, \nys approximation, has been well investigated, and various landmark points selection methods have been proposed to improve the approximation quality. However, these improved \nys methods cannot be directly applied to the online learning setting as they need to access the entire dataset to learn the landmark points, while we need to update model on-the-fly in the online setting. To address this challenge, we propose Adaptive \nys approximation for solving nonlinear online learning problems. The key idea is to adaptively modify the landmark points via online kmeans and adjust the model accordingly via solving least square problem followed by a gradient descent step. We show that the resulting algorithm outperforms state-of-the-art online learning methods under the same budget.
\end{abstract}
\keywords{Online Learning, Kernel Methods, Classification.}
\maketitle

\section{Introduction}
Online learning~\cite{Crammer:2006} has been widely used in many machine learning problems, e.g, email filtering and ads advertisement, where data points are observed sequentially. In the online learning setting, there is often a limited budget/memory allowed for storing only a small set of samples, instead of the entire data, and it updates the model and makes prediction on-the-fly. The online learning setting is significantly in contrast with offline learning, where the learning has the access to the entire data, and is able to go through the data multiple times to train a model, leading to higher accuracy than their online counterparts.

Nonlinear online learning~\cite{Dekel05,Orabona:2008} has drawn a considerable amount of attention as it captures nonlinearity in the data which cannot be effectively modeled in a linear online learning method, and usually achieves better accuracy. One group of nonlinear online learning is based on kernels, that is, kernel-based online learning. An online kernel learning method typically requires a budget in memory for storing a set of data points as support vectors for representing a kernel based model. In an online learning process, this support vectors set can grow dramatically. Therefore it is important to keep the number of support vectors (i.e., the budget size) bounded. Several popular techniques have been proposed to remove, project, and merge support vectors as the online learning process proceeds so as to bound the budget size. In contrast to the traditional work on keeping support vectors in the budget under the online learning setting, it is of increasing interest to construct the kernel-based nonlinear mapping for data points to enable nonlinear online learning.

Kernel approximation is a popular method to approximate the kernel mapping and transform a data point from input feature space to another space via non-linear mapping, and then a linear online learning method is applied in this new space. \nys method~\cite{CW01a,Zhang:2008,RC16} is a popular kernel approximation method to construct non-linear kernel mapping for data points, which has shown superior performance compared to support vectors based nonlinear online learning algorithms \cite{nogd}. From the nonlinear feature perspective, the basic idea of \nys is to construct non-linear mapping based on a set of \emph{landmark} points. Many algorithms have been proposed to select the landmark points optimally for improving the quality of the nonlinear mapping from \nys approximation, e.g., kmeans \nys~\cite{Zhang:2008}, leverage score based \nys~\cite{AG13a}; however, these methods can only be applied in an offline setting, and their landmark points need to be computed based on the entire data.

In this paper, our goal is to improve \nys method under online learning scenario, where only a single pass through the data is allowed for updating the model and making prediction. More specifically, instead of using a fixed set of landmark points, which are often selected at the beginning of an online learning process, we propose to adaptively learn the landmark points according to the online data stream. When the distance between a new data point and its closest kmeans centroid (a landmark point) is larger than a threshold, we update the kmeans centroids (landmark points), and thus change the non-linear mapping. Due to the change of mapping, we adopt a two-stage optimization scheme to update the model based on the new mapping by solving a least squares problem followed by a gradient descent step. We compare our method with classical and state-of-the-art online learning algorithms, and provide extensive empirical results on several
real-world datasets for regression and classification under different losses. The experiments show that our method achieves higher accuracy under the same budget. Theoretically, we show that our model has regret bound of $O(\sqrt{T})$ where $T$ is the number of data points in the data stream.

We summarize related work in Section \ref{sec:related}. In Section \ref{sec:background} we formally introduce the online learning problem and show how to generate kernel mapping via kernel approximation. We propose our model in Section \ref{sec:algorithm}. Empirical results on several real-world datasets are in Section \ref{sec:exp}.
\section{Related Work}
\label{sec:related}
Online learning is widely used in embedded systems, such as mobile applications. Popular online learning algorithms can be categorized into linear and nonlinear methods. Linear online learning~\cite{Crammer:2006} has been well investigated in classification and regression tasks, where it learns a linear model in the input feature space. Because linear methods are often incapable of capturing the nonlinearity in the data, nonlinear online learning, especially kernel-based methods~\cite{OGD}, has drawn much attention in the community. In particular, one line of research directly tries to reduce the size of support vectors to fit the budget constraints. Several strategies have been  proposed to bound the size of the support vectors, including removing redundant support vectors and retaining important ones in the budget, e.g.,~\cite{Dekel05}; or projecting support vectors, e.g., \cite{wang10b,Orabona:2008}; or merging/deleting the existing ones~\cite{Le2016DualSG}. Another set of research on nonlinear online learning is to construct the approximate nonlinear mapping for the data points~\cite{nogd,Le2016DualSG}. Several works have also provided a theoretical analysis for online learning~\cite{Rakhlin2011OnlineLB}.

Another line of research has focused on speeding up kernel machines by explicitly constructing nonlinear maps via kernel approximation. Among different kernel approximation methods, random Fourier feature~\cite{Rahimi:2007} and \nys based method~\cite{CW01a,Drineas05} are two popular and efficient methods. The idea of \nys based method is to generate a low-rank approximation based on the landmark points and the corresponding columns of the kernel matrix. A variety of strategies have been proposed to improve standard \nys, where the landmark points are sampled randomly in the dataset. For example, kmeans \nys~\cite{Zhang:2008} uses kmeans centroids as landmark points; ensemble \nys~\cite{SK09a} combines several landmark points together resulting in a better approximation; ~\cite{AG13a} proposes to use leverage scores for sampling. FastNys~\cite{si2016} constructs pseudo-landmark points for computational benefit. 

However, the previously mentioned improved \nys methods cannot be directly used to construct a non-linear map in an online setting, as they need to access and go through the entire data several times to construct a good approximation. The prediction model is then trained on that approximation. For example, kmeans \nys~\cite{Zhang:2008} needs to perform kmeans on the entire data to obtain centroids and use them as landmark points to train a model. \cite{Pegasos2} needs to search in the entire dataset for the active set of points, and then use them as landmark points in \nys method to train SVM model based on Pegasos algorithm. One way to solve this challenge is to use the first incoming points as landmark points to construct non-linear mapping for online learning~\cite{nogd}. The main difference between our method and \cite{nogd} is that we adaptively update the landmarks and its kernel mapping during the entire online learning process, meanwhile adjusting the model if mapping is updated. 

As another popular and efficient kernel approximation method, random Fourier feature~\cite{Rahimi:2007} based methods also show much success for large-scale machine learning applications. The main idea is to approximate the kernel function based on the Fourier transform. There are many variations of random Fourier feature methods. As an instance, Fastfood~\cite{Le:2013} speeds up the projection operation using Hadamard transform; Doubly SGD~\cite{NIPS2014_5238} uses two unbiased stochastic approximation to the function gradient without the need to store the projection matrix; FOGD~\cite{nogd} applies random Fourier features with the online gradient descent to deal with the streaming data; DualSGD~\cite{Le2016DualSG} uses random Fourier feature to store the information of the removed data points to maintain the budget under the online setting; Reparametered Random Feature(RRF) ~\cite{ijcai2017} learns the distribution parameters in random Fourier feature along with the online learning process.

Most of the above random Fourier feature based methods can be applied under the online learning setting, where the data are coming in sequence. We have compared with some representative random Fourier feature based methods in the experimental section.

\section{Nonlinear Online Learning}
\label{sec:background}
Given $T$ data points, $X_T = \{(\bx_1,y_1), \cdots, (\bx_T,y_T)\}$  where $\bx_t\in \R^d$ and $y_t\in \R$, 
we focus on the following nonlinear regularized risk minimization problem: 
\begin{equation}
\underset{\bw}{\min } \frac{\lambda}{2}\|\bw\|_2^2+\frac{1}{T}\sum\limits_{t=1}^T \ell(y_t,\bw\varphi^T(\bx_t)),
 \label{eq:obj_origin}
\end{equation}
where $\varphi(\cdot)$ is a function that nonlinearly maps data point $\bx_t$ to a 
high-dimensional or even infinite-dimensional space;
$\ell$ is a 
loss function. For example,  
$\ell(y,u) = \max(0, 1-y u)$  is the hinge loss in kernel SVM, $\ell(y,u) = (y-u)^2$ is the squared loss in kernel ridge regression, and $\ell(y, u) = \log(1+\exp(-y u))$ is the logistic loss in kernel logistic regression. 
{\bf In the online setting, data arrives as a sequence and the size of entire data is unknown in advance.} Thus, the regularized loss for the $t$-th iteration is defined as,
\begin{equation}
L_t(\bw) = \frac{\lambda}{2}\|\bw\|_2^2+\ell(y_t,\bw\varphi^T(\bx_t)).
\label{eq:losst}
\end{equation}
The goal of online learning is to learn a sequence of models $\bw_t$ such that the regret is minimized. The regret can be defined as:
\begin{equation}
\sum_{t=1}^T (L_t(\bw_t)-L_t(\bw^*)),
\end{equation}
where $\bw^*$ is the optimal model learned assuming access to all the $T$ samples in one go (also known as offline mode). Kernel methods are widely used in learning offline nonlinear models, however, in an online setting, learning a good nonlinear mapping is a very challenging problem. 

In this work, we propose to approximate the nonlinear kernel map by assuming the kernel matrix $G$ for $X_T$ to be low-rank. Among the  
popular kernel approximation methods, \nys approximation 
has garnered much attention\cite{CW01a,Drineas05,Drineas04fastmonte,SK09a}.  \nys method approximates a kernel matrix by sampling $m\ll T$ landmark points $\{\bu_j\}_{j=1}^m$, 
and forming two matrices $C\in \R^{T\times m}$ and 
$E\in \R^{m\times m}$ based on the kernel function $k(\cdot, \cdot)$, where $C_{ij} = k(\bx_i, \bu_j)$ and $E_{ij}=k(\bu_i, \bu_j)$, and then approximates the kernel matrix $G$ as 
  \begin{equation}
    G\approx \bar{G} := C E^{\dagger} C^T, 
    \label{eq:nystrom_origin}
    \end{equation}
where $E^{\dagger}$ denotes the pseudo-inverse of $E$. From the feature point of view, \Nys method may be viewed as constructing nonlinear features for $\bx_i$. Mathematically, with $m$ landmark points $\bu_1,\cdots, \bu_m$, the nonlinear feature mapping for $\bx_i$ is 
\begin{equation}
 \varphi_M(\bx_i) = [k(\bx_i,\bu_1),k(\bx_i,\bu_2),\cdots,k(\bx_i,\bu_m)]{E^\dagger}^{\frac{1}{2}},
\label{eq:nys_mapping}
\end{equation}
where $M = \{\bu_1,\cdots, \bu_m\}$. There exist a lot of works on how to choose landmark points $\bu_1,\cdots, \bu_m$ to achieve better nonlinear mapping with low approximation error \cite{SK12a,AG13a,Zhang:2008}. 

Given the \nys mapping $\varphi_M(\cdot)$, the loss for $t$-th iteration in Eq~(\ref{eq:losst}) can be written as, 
\begin{equation}
L_t(\bw) = \frac{\lambda}{2}\|\bw\|^2+\ell(y_t, \bw\varphi_M^T(\bx_t)).
\label{eq:nonmap}
\end{equation} 
Note that $\bw$ represents the classification or regression model, and the feature mapping $\varphi_M(\cdot)$
is parameterized by the landmark points $\bu_1, \cdots, \bu_m$. 

\section{Proposed Method}
\label{sec:algorithm}
In this section, we introduce our Online Adaptive \nys Approximation (OANA), a simple way to update the landmark points in \nys approximation along the online learning process. We then propose our main algorithm, Nonlinear Online Learning with Adaptive \Nys Approximation (NOLANA), which applies OANA to construct nonlinear feature mapping, and updates the model accordingly to fit the new data.

\subsection{Online Adaptive \Nys Approximation}
\label{sec:oana}
We start with the description of our online kernel approximation method: OANA. There has been substantial research on improving \nys approximation and constructing kernel mapping, e.g., kmeans \nys\cite{Zhang:2008} and MEKA\cite{meka}, but they need to store the entire data, which is infeasible under the online learning setting. In the previous work~\cite{nogd}, to use \nys approximation and its mapping under the online learning setting, the first $m$ data points in the data stream were fixed as landmark points $M$ and the nonlinear mapping $\varphi_M(\cdot)$ was represented as in Eq~(\ref{eq:nys_mapping}). There is one issue with using a fixed $M$: the initial $m$ samples as landmark points may be a poor choice for learning a good mapping, as they might not represent the characteristic of the data stream. Here, we proposed Online Adaptive \nys Approximation(OANA) to address this issue.

{\bf Online Landmark Points Updates:} The key idea of OANA is to update landmarks using online kmeans. As shown in many previous works e.g.,  \cite{Zhang:2008} and \cite{meka}, using kmeans centroids as landmark points in \nys approximation achieves better approximation than using randomly sampled landmark points from the dataset; however, to achieve good accuracy, kmeans algorithm needs to make multiple passes through data. This is infeasible under the online setting where we have budget constraint and cannot store the entire data. Instead, we adopt the idea of online kmeans, and use its centroids as landmark points to construct an approximate kernel mapping. 

More specifically, at iteration $t$, assume we already have $m$ online kmeans centroids $\bu_1,\cdots,\bu_m$. Then, given a new sample $\bx_t$, its nonlinear kernel mapping is, 
\begin{equation}
\varphi_M(\bx_t) = [k(\bx_t,\bu_1),k(\bx_t,\bu_2),\cdots,k(\bx_t,\bu_m)]U_rS_r^{-\frac{1}{2}}
\label{eq:mapping2}
\end{equation}
where $E_{ij} = k(\bu_i,\bu_j)$ and its rank-$r$ SVD is $E \approx U_rS_rU_r^T$.

Next, we update the centroids. To make the computation more efficient, we only update the kmeans centroids if the distance between $\bx_t$ to its closest cluster $q$ is larger than $\epsilon$, i.e., if $\|\bx_t-\bu_q\|^2\geq \epsilon$, then we will update the $q$-th centroid $\bu_q$ as 
\begin{equation}
\bu_q \leftarrow \frac{N_q\bu_q+\bx_t}{N_q+1},
\end{equation}
where $N_q$ is the number of points at iteration $t$ in the $q$-th cluster, which is updated continuously during the online learning process. If $\|\bx_t-\bu_q\|^2 < \epsilon$ , we keep the centroids or landmark points unchanged and thus the nonlinear mapping does not change as well. In Table \ref{tab:eps}, we will show that using this strategy for online learning leads to less frequent landmark points updates, and improves the time complexity for update as well as the prediction accuracy.

{\bf Fast SVD Update:} If we change $q$-th centroid or landmark point, we need to recompute the SVD of the small $m\times m$ kernel matrix $\bar{E}$ with $\bar{E}_{ij} = k(\bu_i,\bu_j)$. Note that $\bar{E}$ is a rank-2 update of the previous landmark points' kernel matrix $E$, that is, replacing $q$-th row and $q$-th column in $\bar{E}$ with the kernel value between new $\bar{\bu}_q$ with all other landmark points. That is 
\begin{align}
\bar{E} &= E+ab^T+ba^T,\label{eq:Echange}\\
a &= [0,\cdots, 1, 0,\cdots,0]^T, \nonumber\\
b &= [k(\bu_0,\bar{\bu}_q)-k(\bu_0,\bu_{q}),\cdots,\frac{k(\bar{\bu}_q,\bar{\bu}_q)-k(\bu_q,\bu_q)}{2},\nonumber\\
&\cdots,k(\bu_m,\bar{\bu}_q)-k(\bu_m,\bu_q)]^T, \nonumber
\end{align}
where $a$ and $b$ are two length-$m$ vectors with $a_q=1$ and $b$ contains the kernel values changes between $E$ and $\bar{E}$. 

There are several ways we can update the $\bar{E}$'s SVD efficiently without computing it from scratch based on Eq.(\ref{eq:Echange}), such as computing rank-2 update of SVD \cite{BRAND200620,HOEGAERTS2007220} or using randomized SVD\cite{Tropp11} with warm start. 

Let us explain how to use randomized SVD with warm start to compute $\bar{E}$'s SVD. According to \cite{Tropp11}, the main step to approximate the SVD of $\bar{E}$ using randomized SVD is to compute the following equation for $p$ times:
\begin{equation}
Q\leftarrow \bar{E}Q,
\label{eq:randsvd}
\end{equation}
where $Q$ is a random matrix. Empirically, we observe that the eigenspaces of $E$ and $\bar{E}$ are very similar as these two matrices are only different in one row and one column. Motivated by \cite{SiSDP14}, as we know the approximate SVD of $E\approx USU^T$, where $U$ consists of top $r$ eigenvectors and $S$ contains their corresponding eigenvalues, and $E$ and $\bar{E}$ are similar, so we use $Q=U$ 
to warm start Eq.(\ref{eq:randsvd}). It only takes 2 or 3 iterations of Eq.(\ref{eq:randsvd}) ($p=2$ or $3$) to converge to a good approximation of $\bar{E}$'s SVD, which can speed up the SVD computation. The time complexity is $O(m^2r)$. In the budget we only store $E$'s SVD, so we replace $\bar{E}$ with $USU^T+ab^T+ba^T$ when computing Eq.(\ref{eq:randsvd}).

The second method to compute $\bar{E}$'s SVD is rank-2 update SVD to directly use the fact that $\bar{E}=E+ab^T+ba^T$. Updating SVD for rank one matrix perturbation has been well studied\cite{BRAND200620,HOEGAERTS2007220}. Although $\bar{E}$ is a rank-2 update of $E$, we can transform rank-2 problem into two rank-1 SVD update problems.

The comparison between computing SVD from scratch, rank-2 update SVD, and randomized SVD with warm start is shown in Figure \ref{fig:svdtime}.  Here we show the entire processing time (including updating and prediction time, not just the time for using different SVD solvers), and the only difference among these three curves is on the SVD solver. We can see randomized SVD with warm start is the fastest. The reason for rank-2 SVD update SVD to be slow is because in the regime we are working on, $m$ (the size of $K$) is not large, usually to be a few hundreds, and $r$ is similar to $m$ ($r=0.8m$), and we need to use rank-1 SVD update twice to get rank-2 SVD update, therefore there is no much gain for using rank-2 update SVD in this case. Also, the prediction accuracies are similar with different SVD solvers. Based on the above analysis, we use randomized SVD with warm start in our algorithm.

\begin{figure}[htp]
\centering
	{\includegraphics[width=.4\textwidth]{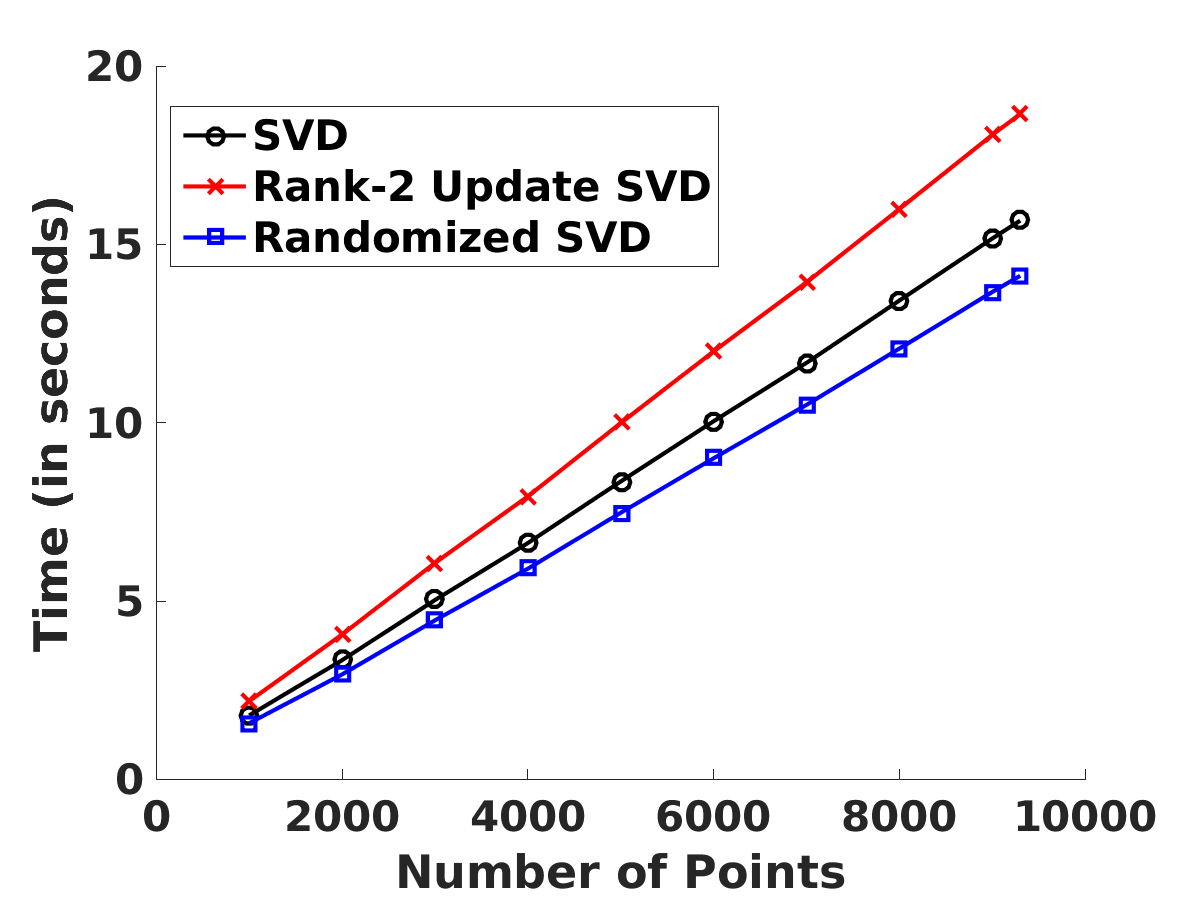}}
\caption{Time (including updating model and making predictions, not just the time for using different SVD solvers) for USPS when increasing the number of samples. Here, x axis varies the number of data points observed. As the size of the matrix for SVD update is small ( a few hundreds), and the number of samples observed is large, so the time cost for the learning process is close to linear in the number of data points observed. }
\label{fig:svdtime}
\end{figure}

In all, the OANA algorithm is given in Algorithm \ref{alg:main1}. Note that in the entire online learning process, in order to update the centroids and construct \nys mappings, we need to store the landmark points set $M$, $U$ and $S$ ($E$'s eigenvectors and eigenvalues). Therefore for a rank-$r$ mapping, the budget size is $O(md+mr)$, where $d$ is the dimension of the sample, $m$ is the number of landmark points, $r$ is the dimension of the mapping. Also $\epsilon$ in Algorithm \ref{alg:main1} controls the number of updates for landmark points and the nonlinear mapping.
\begin{algorithm}[t]
\caption{Online Adaptive \Nys Approximation}
 \label{alg:main1}
 \KwIn{$\bx_1,\cdots,\bx_T$ arrives in sequence; number of landmark points $m$, rank $r$, threshold $\epsilon$.}
 \KwOut{Nonlinear kernel mapping for each sample.}
Initialize cluster centroids $M$ as $\bu_1,\bu_2,\cdots,\bu_m\in \R^d$ randomly\;
Compute the rank-$r$ SVD of the kernel matrix $E\approx U_rS_rU_r^T$ where $E_{ij} = k(\bu_i,\bu_j)$\;
\For{$t=1,\cdots, T$}{
 Find the closest cluster $\bx_t$ belongs to, that is,
 $q = \argmin_z \|\bx_t-\bu_z\|_2^2$\;
 \uIf{$\|\bx_t-\bc_q\|^2\geq\epsilon$}{
 Update the $q$-th cluster's centroid as
 $\bu_q \leftarrow \frac{N_q\bu_q+\bx_t}{N_q+1}$\;
 Update the number of points belonging to $q$-th cluster $N_q \leftarrow N_q+1$\;
 Warm start the randomized SVD to compute the new landmark points' kernel matrix $\bar{E}$'s rank-$r$ SVD: $\bar{E}\approx U_rS_rU_r^T$ as shown in Sec.\ref{sec:oana}\;   
  }
 The approximate kernel mapping for $\bx_t$, $\varphi_M(\bx_t)$ as in Eq~(\ref{eq:mapping2})\;
 }
\end{algorithm}

\subsection{Nonlinear Online Learning with Adaptive \Nys Approximation}
\label{sec:main}
After we update the landmark points set $M$ and have a new kernel mapping $\varphi_M(\cdot)$, we need to update the model $\bw_t$ correspondingly so that the data points are trained with the new mapping. Suppose at $t$-th iteration, given $\bx_t$, we change the landmark point, and there will be three changes: landmark points set $\bar{M}$, non-linear mapping $\varphi_{\bar{M}}(\cdot)$, and the model $\bw$. To update the model $\bw$, we propose to first fit the new point $\bx_t$, and then update the model based on the new non-linear mapping. As a consequence, we follow the two-stage scheme to update $\bw$: first updating the model $\bw$ in observation of new data point $\bx_t$, and then updating the model based on the change of landmark points ($M\rightarrow\bar{M}$). 

For the first stage, we update the model by minimizing the loss for point $\bx_t$ given the landmark point set $M$:
\begin{equation}
\label{eq:linearsystem}
\min_{\bw}\frac{\lambda}{2}\|\bw\|_2^2+\ell(y_t,\bw\varphi^T_M(\bx_t)).
\end{equation}
Eq(\ref{eq:linearsystem}) is solved using gradient descent.


For the second stage, after we update the model based on $\bx_t$ and update the landmark points and mapping, we solve the following the optimization problem:
\begin{align}
\label{eq:linearsystem1}
\min_{\bar{\bw}}\sum_{i=1}^m (\bw_t\varphi^T_M(\bu_i)-\bar{\bw}\varphi^T_{\bar{M}}(\bu_i))^2+\theta\|\bar{\bw}\|^2 \\ \nonumber
\end{align}
where $\bw_t$ is the model after the stage one in Eq(\ref{eq:linearsystem}); $\bar{M} = \{\bu_1, \cdots, \bu_m\}$ is the new set of landmark points at $t$-th iteration after landmark points update; $M$ is the set of landmarks before update; $\theta$ is a regularization term. The first term in Eq.\eqref{eq:linearsystem1} measures the discrepancy in model prediction before and after changing landmark points and the corresponding feature map. The intuition is since the model $\bw_t$ works well for the old mapping, we encourage the prediction value for old and new mapping to be close. $\bar{\bw}$ is for smooth purpose so that to limit the change of the model. Eq.\eqref{eq:linearsystem1} is the standard least squares problem, which can be solved efficiently as described in \cite{Boyd:2004}.

\begin{figure}[htp]
\centering
	{\includegraphics[width=.4\textwidth]{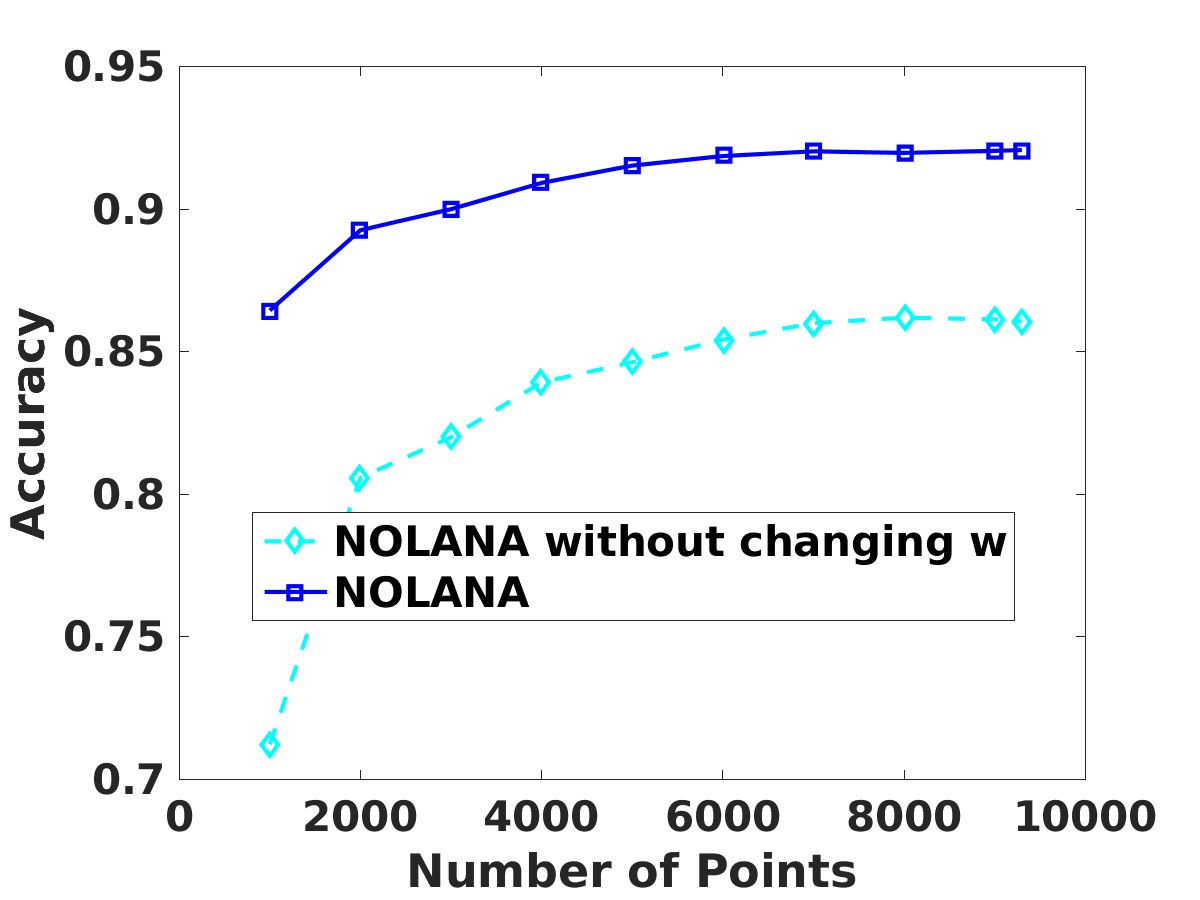}}
\caption{Effect of changing or not changing model after updating mapping in USPS dataset.}
\label{fig:changew}
\end{figure}

We show the effect of changing $\bw$ using two-stage procedure in Figure \ref{fig:changew}. The dot line in Figure \ref{fig:changew} only updates the model based on the new data point without considering the discrepancy in model prediction before and after changing the feature map, that is, does not consider the second stage Eq(\ref{eq:linearsystem1}). We can clearly see the improvement by updating $\bw$ using Eq.(\ref{eq:linearsystem1}) and necessary of taking two-stage procedure to update the model $\bw$.
\begin{algorithm}[t]
\caption{Nonlinear Online Learning with Adaptive \Nys Approximation(NOLANA)}
 \label{alg:main_2}
 \KwIn{$\bx_1,\cdots,\bx_T$ arrives in sequence; number of landmark points $m$, rank $r$}
 \KwOut{Prediction for each new coming data point}
Initialize: cluster centroids $M$ as $\bu_1,\bu_2,\cdots,\bu_m\in \R^d$ randomly; initial weight $\bw=0$\;
Construct the kernel matrix $E$ from set $M$\;
Perform eigendecomposition over $E$ such that $E \approx U_rS_rU_r^T$\;
\For{$t=1,\cdots, T$}{
 Compute the approximate kernel mapping for $\bx_t$ as $\varphi_{M}(\bx_t)$ as shown in Eq~(\ref{eq:mapping2})\;
Use $\bw_t$ to predict on new sample $\bx_t$ using $f(\bx_t) = \bw_t\varphi_M(\bx_t)$\;
 \uIf{Kernel mapping changed as shown in Algorithm \ref{alg:main1}}{
Compute the non-linear mapping for $\bu_1,\cdots,\bu_m$ under the old and new mapping, that is to get $\varphi_M(\bu_i)$ and $\varphi_{\bar{M}}(\bu_i)$\;
Update $\bw_t$ by solving Eq~(\ref{eq:linearsystem}) and Eq~(\ref{eq:linearsystem1})\;
 }
 \Else{
Receive $y_t$ and suffer loss $\ell(y_t, \bw_t\varphi^T_M(\bx_t))$\;
Update $\bw_{t+1} = \bw_t-\eta\nabla \ell(y_t, \bw_t\varphi^T_M(\bx_t))$\;
 }
 }
\end{algorithm}

By combining OANA to construct features following the above two-stage procedure to update model under the new mapping, our final algorithm Nonlinear Online Learning with Adaptive \Nys Approximation (NOLANA) is given in Algorithm~\ref{alg:main_2}. We can use NOLANA for different machine learning tasks including classification, e.g., using logistic loss and hinge loss, and regression, e.g., squared loss.

\subsection{Analysis}

{\bf Time and budget analysis:} In the entire online process as shown in Algorithm \ref{alg:main_2}, we need to store $m$ landmark points ($M$), and the eigenvectors and eigenvalues of the resulting kernel matrix ($E$). Therefore the budget size is $O(md+mr)$, which is the same as of \cite{nogd}. If we change the landmark points, we need to first update the SVD of the new set of landmark points using randomized SVD which takes $O(m^2r)$ time and update the model $\bw$, which takes $O(mr^2)$ time as well. If we do not need to change landmark point, then we just need a SGD step to update the model. As for the prediction, the time complexity is $O(md+mr)$, which is the same with the traditional \nys method's prediction time, therefore we do not cause extra time overhead for prediction.
\begin{table}
  \centering
  \caption{The accuracy, number of updates, and updating time varying $\epsilon$ in USPS dataset.}
  \label{tab:eps}
\resizebox{8.5cm}{!}{
  \begin{tabular}{|c|r|r|r|r|r|r|}
    \hline\hline
    $\epsilon$& 0&25&50&100&200&300\\
   \hline
    Accuracy($\%$)&92.00&    91.91	&92.14	&	90.91&90.27&90.52 \\
    \hline
    Time(in secs)& 30.43  &  25.98	&20.57	&	9.67&5.91&5.75\\
	\hline
    no. of updates &9298&7742&5523&1262&23&0\\
    \hline \hline
  \end{tabular}
  }
\end{table}

Note that we can control the total number of updates with $\epsilon$ (step 5 in Algorithm \ref{alg:main1}). If $\epsilon$ is small, Algorithm \ref{alg:main_2} has more updates than the case when $\epsilon$ is large. Because of that, we will have discontinuous landmark points updates. Such an update has two benefits: reducing the update time, and improving the prediction accuracy. If there are too many updates (small $\epsilon$), it takes longer updating time, and might have even higher prediction error as it keeps changing the model to fit one sample. On the other hand, if there are too few updates (large $\epsilon$), it is faster, but the model is not well adjusted to the data points resulting in low accuracy. Table \ref{tab:eps} shows the computation time (including the update and prediction time), number of updates, accuracy for different choices of $\epsilon$ on USPS dataset, where $\epsilon$ controls the number of updates. The prediction time is the same for all the $\epsilon$. When $\epsilon$ is 0, the update is continuous, that is, we update a landmark point in the budget for each new coming point. When $\epsilon=300$, there is no updates. From Table \ref{tab:eps}, we can see that $\epsilon=50$ achieves the highest accuracy, and when we increase $\epsilon$, the update time and accuracy decrease. Therefore, controlling the number of updates is important for accuracy and computational purposes, and thus there is a trade-off between accuracy and computation time when varying the distance threshold $\epsilon$.

There are several heuristics one can employ in Algorithm \ref{alg:main_2} for updating landmark points under our framework. For example, using weighted kmeans to incorporate the density information of each cluster or assigning expiration time to each landmark point to prevent the landmark points staying in the budget for too long. Extensions based on these heuristics is a future work.

{\bf Regret Analysis:} Next we show the regret analysis for our method.
\begin{theorem}
Assume we learn with kernel function $k(\cdot,\cdot)$, and some convex loss function $\ell(y,f(\bx))$ and the norm of its gradient in all iterations is bounded by a constant $L$. Given a sequence of $T$ samples $\bx_i,\cdots,\bx_T$ that form a kernel matrix $K\in R^{T\times T}$, let $\bar{K}_t$ be the \nys approximation of $K$ using our method at step $t$. In addition, define $\bw_t^*$ be the optimal solution when using \nys kernel approximation at step $t$ assuming we have access to all the instances as a batch to train a model. Let $\bw_t$ be the model our method learned.  We define $f^*$ as the optimal classifier in the original kernel space (using $K$) with observing all the instance. Then,
\begin{align}
\sum_{t=1}^{T} L_t(\bw_t)- \sum_{t=1}^{T} L_t(f^*)\leq &\frac{\|\bw_T^*\|^2}{2\eta}+\frac{\eta}{2}L^2T+\tau (R+C)  \nonumber  \\
&+\frac{1}{2T\lambda}\sum_{t=1}^{T}\|K-\bar{K}_t\|_2, \label{thm:th2}
\end{align}
where $R=\max_{t=2}^T\|\bar{\bw}_t-\bw_t\|^2$, the maximum change we made when changing the mapping at step $t$; $C=\max_{t=2}^T\|{\bw}_{t+1}^*-\bw_t^*\|^2$, the maximum difference between optimal model under old and new mappings; and $\tau$ is the number of times we adjust mapping; $\eta$ is the learning rate; $\lambda$ is the regularization parameter.
\end{theorem}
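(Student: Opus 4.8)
The plan is to insert an intermediate comparator and split the regret into an \emph{online tracking} term and a \emph{kernel approximation} term. Writing $\bw_t^*$ for the batch optimum under the \nys map in force at step $t$, I would decompose
\[
\sum_{t=1}^T L_t(\bw_t) - \sum_{t=1}^T L_t(f^*) = (\mathrm{A}) + (\mathrm{B}),
\]
where $(\mathrm{A})=\sum_{t=1}^T\big(L_t(\bw_t)-L_t(\bw_t^*)\big)$ and $(\mathrm{B})=\sum_{t=1}^T\big(L_t(\bw_t^*)-L_t(f^*)\big)$. Term $(\mathrm{A})$ measures how well our online iterates track the moving batch optimum of the \emph{current} feature map and should produce the first three summands $\frac{\|\bw_T^*\|^2}{2\eta}+\frac{\eta}{2}L^2T+\tau(R+C)$, while term $(\mathrm{B})$ is a pure offline approximation gap and should produce $\frac{1}{2T\lambda}\sum_t\|K-\bar{K}_t\|_2$.

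For term $(\mathrm{A})$ I would run the standard online gradient descent potential argument on $\Phi_t=\|\bw_t-\bw_t^*\|^2$. On iterations where the map is \emph{not} updated, the NOLANA step is an exact gradient step, so convexity of $\ell$ together with the $L$-bound on the gradient gives the usual one-step inequality $\Phi_{t+1}\le \Phi_t-2\eta\big(L_t(\bw_t)-L_t(\bw_t^*)\big)+\eta^2 L^2$, and here $\bw_{t+1}^*=\bw_t^*$, so the comparator does not move. On the $\tau$ iterations where the map \emph{is} updated, two discontinuities break the telescoping: our iterate jumps from $\bw_t$ to $\bar{\bw}_t$ (the two-stage least-squares refit), and the comparator jumps from $\bw_t^*$ to $\bw_{t+1}^*$. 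I would absorb these into the potential via the definitions $\|\bar{\bw}_t-\bw_t\|^2\le R$ and $\|\bw_{t+1}^*-\bw_t^*\|^2\le C$, so each map change charges at most $R+C$ to the running sum. Telescoping $\Phi_t$ (with $\bw_1=\mathbf{0}$) and dividing by $2\eta$ then yields $\frac{\|\bw_T^*\|^2}{2\eta}+\frac{\eta}{2}L^2T+\tau(R+C)$.

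For term $(\mathrm{B})$ I would exploit the $\lambda$-strong convexity of the regularized objective. Since $\bw_t^*$ and $f^*$ are the batch minimizers of the same regularized risk under the kernels $\bar{K}_t$ and $K$ respectively, a kernel-perturbation estimate bounds the induced optimality gap by the spectral discrepancy of the Gram matrices: the $\frac{1}{2T\lambda}$ factor is exactly what strong convexity of modulus $\lambda$ (together with the $\frac{1}{T}$ normalization of the empirical risk) produces, while $\|K-\bar{K}_t\|_2$ controls how much the two objectives can differ. Summing over $t$ gives $\frac{1}{2T\lambda}\sum_t\|K-\bar{K}_t\|_2$, completing the bound.

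I expect term $(\mathrm{B})$ to be the main obstacle, since it requires carefully translating the \emph{matrix} spectral error $\|K-\bar{K}_t\|_2$ into a difference of \emph{loss values} evaluated at the two batch optima, across the change of feature space from the finite-dimensional \nys map to the full RKHS. By contrast, the bookkeeping in $(\mathrm{A})$ around the $\tau$ map changes is delicate but essentially mechanical once $R$ and $C$ are used to charge the iterate and comparator jumps.
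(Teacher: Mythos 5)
Your proposal follows essentially the same route as the paper: the paper likewise bounds $\sum_t\bigl(\ell_t(\bw_t)-\ell_t(\bw_t^*)\bigr)$ via the standard OGD potential $\|\bw_t-\bw_t^*\|^2$, absorbing the non-telescoping jumps at the $\tau$ map changes into $\tau(R+C)$ exactly as you describe, and then converts the comparison from $\bw_t^*$ to $f^*$ to obtain the $\frac{1}{2T\lambda}\sum_t\|K-\bar{K}_t\|_2$ term. The only cosmetic difference is that for your term $(\mathrm{B})$ the paper simply invokes Lemma 1 of the NOGD paper rather than rederiving the strong-convexity/kernel-perturbation estimate you sketch, so your plan is a correct (and slightly more self-contained) version of the same argument.
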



\begin{proof}
Let $\bar{w}_t$ be the model before we adjust the model after changing the \nys feature map. As we use SGD to update the model after seeing $\bx_t$ and $y_t$, we have the following:
\begin{align}
&\|\bar{\bw}_{t+1}-\bw_t^{*}\|^2\\
&=\|\bw_t-\eta \nabla\ell_t(\bw_t)-\bw_t^*\|^2 \nonumber\\
&=\|\bw_t-\bw_t^*\|^2+\eta^2\|\nabla\ell_t(\bw_t)\|-2\eta\nabla\ell_t(\bw_t)(\bw_t-\bw_t^*). \nonumber
\end{align}
Due to the convexity of the loss function, we have
\begin{equation}
\ell_t(\bw_t)-\ell_t(\bw_t^*)\leq \nabla\ell_t(\bw_t)(\bw_t-\bw_t^*).
\end{equation}
Hence,
\begin{equation}
\ell_t(\bw_t)-\ell_t(\bw_t^*)\leq \frac{\|\bw_t-\bw_t^*\|^2-\|\bar{\bw}_{t+1}-\bw_t^*\|^2}{2\eta}+\frac{\eta}{2}\|\nabla\ell_t(\bw_t)\|^2.
\end{equation}
Summing the above over $t=1,\cdots,T$ leads to
\begin{align}
\label{eq:inequality}
&\sum_{t=1}^{T}(\ell_t(\bw_t)-\ell_t(\bw_t^*))\\ \nonumber
&\leq \frac{\|\bw_1-\bw_1^*\|^2-\|\bar{\bw}_{2}-\bw_1^*\|^2+\cdots+\|\bw_{T}-\bw_T^*\|^2-\|\bar{\bw}_{T+1}-\bw_{T}^*\|^2}{2\eta}\\ \nonumber
&\hspace{15pt} +\frac{\eta}{2}\sum_{t=1}^{T}\|\nabla\ell_t(\bw_t)\|^2\nonumber\\ 
&\leq \frac{\|\bw_1-\bw_1^*\|^2-\|\bar{\bw}_{T+1}-\bw_{T}^*\|^2}{2\eta}+\frac{\sum_{t=2}^{T}\|\bar{\bw}_t-\bw_t\|^2}{2\eta}\nonumber\\
&\hspace{15pt}+\frac{\sum_{t=1}^{T}\|\bw_t^*-\bw_{t+1}^*\|^2}{2\eta} +\frac{\eta}{2}\sum_{t=1}^{T}\|\nabla\ell_t(\bw_t)\|^2 \nonumber\\ 
&\leq \frac{\|\bw_1-\bw_1^*\|^2-\|\bar{\bw}_{T+1}-\bw_T^*\|^2}{2\eta}+\tau (R+C)+\frac{\eta}{2}L^2T \nonumber\\ 
&\leq \frac{\|\bw_T\|^2}{2\eta}+\tau (R+C)+\frac{\eta}{2}L^2T. \nonumber
 \end{align}

As proved in \cite{nogd}, the linear optimization problem in kernel space is equivalent to the approximate problem in the functional space of \nys method. Therefore, based on Lemma 1 in \cite{nogd} and Eq(\ref{eq:inequality}), we have
\begin{align}
\sum_{t=1}^{T} L_t(\bw_t)- \sum_{t=1}^{T} L_t(f^*)&\leq \frac{\|\bw^*_T\|^2}{2\eta}+\frac{\eta}{2}L^2T+\tau (R+C)\\ \nonumber
&\hspace{15pt} +\frac{1}{2T\lambda}\sum_{t=1}^{T}\|K-\bar{K}_t\|_2. 
\end{align}

Note that $C$ is a constant related to the loss function; $R$ is sum of changes in the model $\bw$ along the learning process. And $\tau$ is a parameter in our algorithm and can be controlled. 
\end{proof}
It has been shown in many works that kmeans \nys has better approximation than \nys with random sampling. Also as illustrated in Figure \ref{fig:appfig} that compared to the standard \nys, our online \nys mapping achieves lower $\|K-\bar{K}\|_2$. As shown in \cite{tian12}, we have
$\|\tilde{K}-K\|_2\leq O(\frac{T}{B})$, where $\tilde{K}$ is the kernel by standard \nys, similar to the analysis in \cite{nogd} and set $\eta=O(\frac{1}{\sqrt{T}})$,$B=O(\sqrt{T})$, and $\tau=O(\sqrt{T})$, we have
\begin{equation}
\sum_{t=1}^{T} L_t(w_t)- \sum_{t=1}^{T} L_t(f^*)\leq O(\sqrt{T})
\end{equation}

\begin{figure*}[tb]
  \centering
  \begin{tabular}{ccc}
    \subfigure[cpusmall]{
    \label{fig:toyk-4}
    \includegraphics[width=0.3\linewidth]{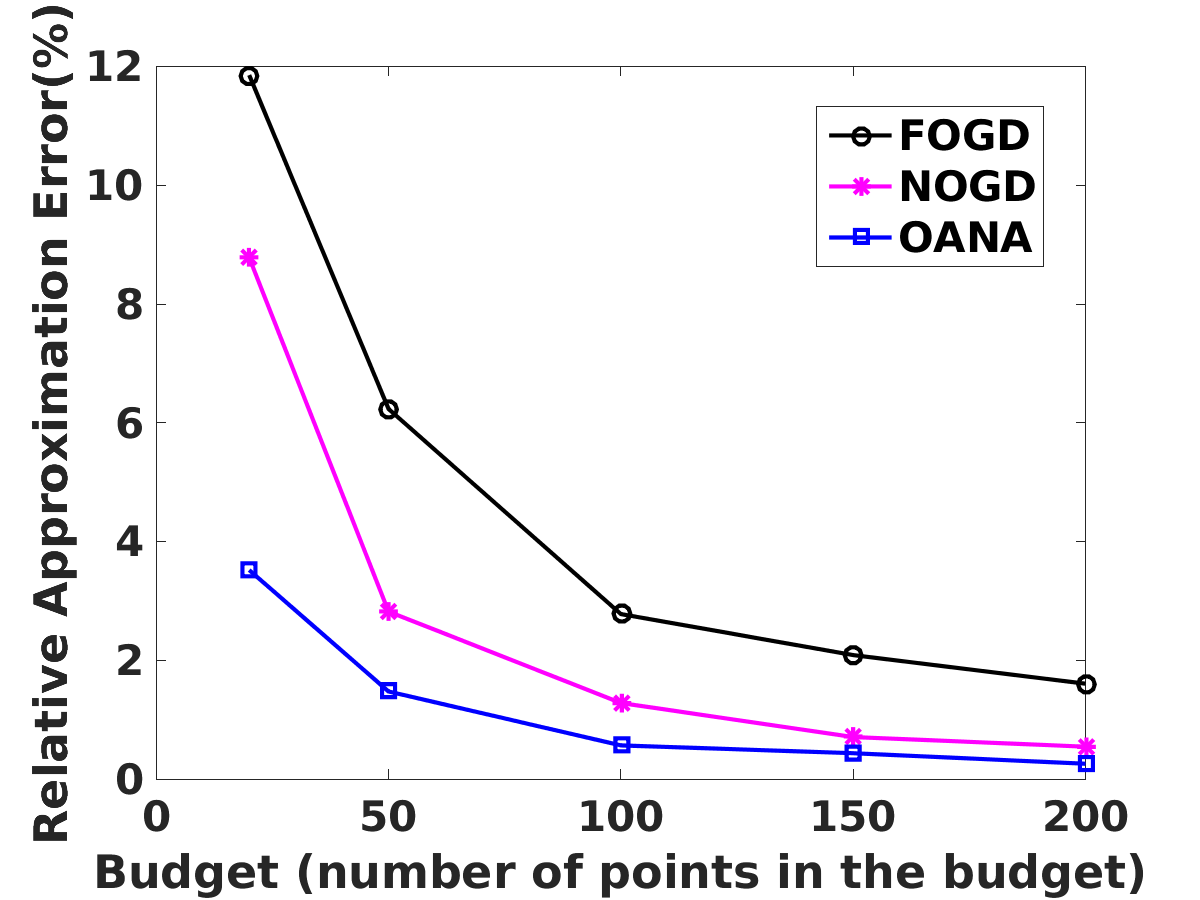}
    }&\hspace{-15pt} 
    \subfigure[usps]{
    \label{fig:toyk-4}
    \includegraphics[width=0.3\linewidth]{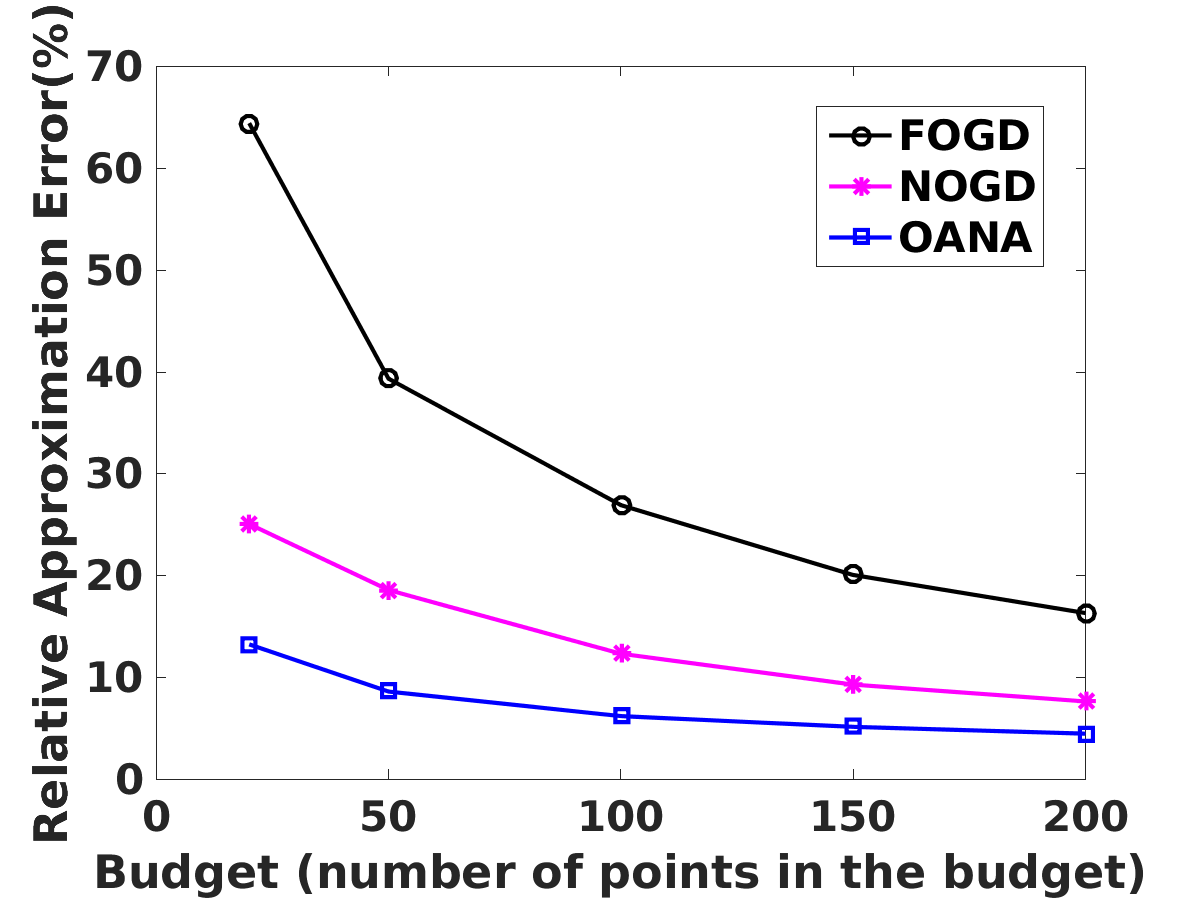}
    } &\hspace{-15pt} 
    \subfigure[covtype]{
    \label{fig:toyk-4}
    \includegraphics[width=0.3\linewidth]{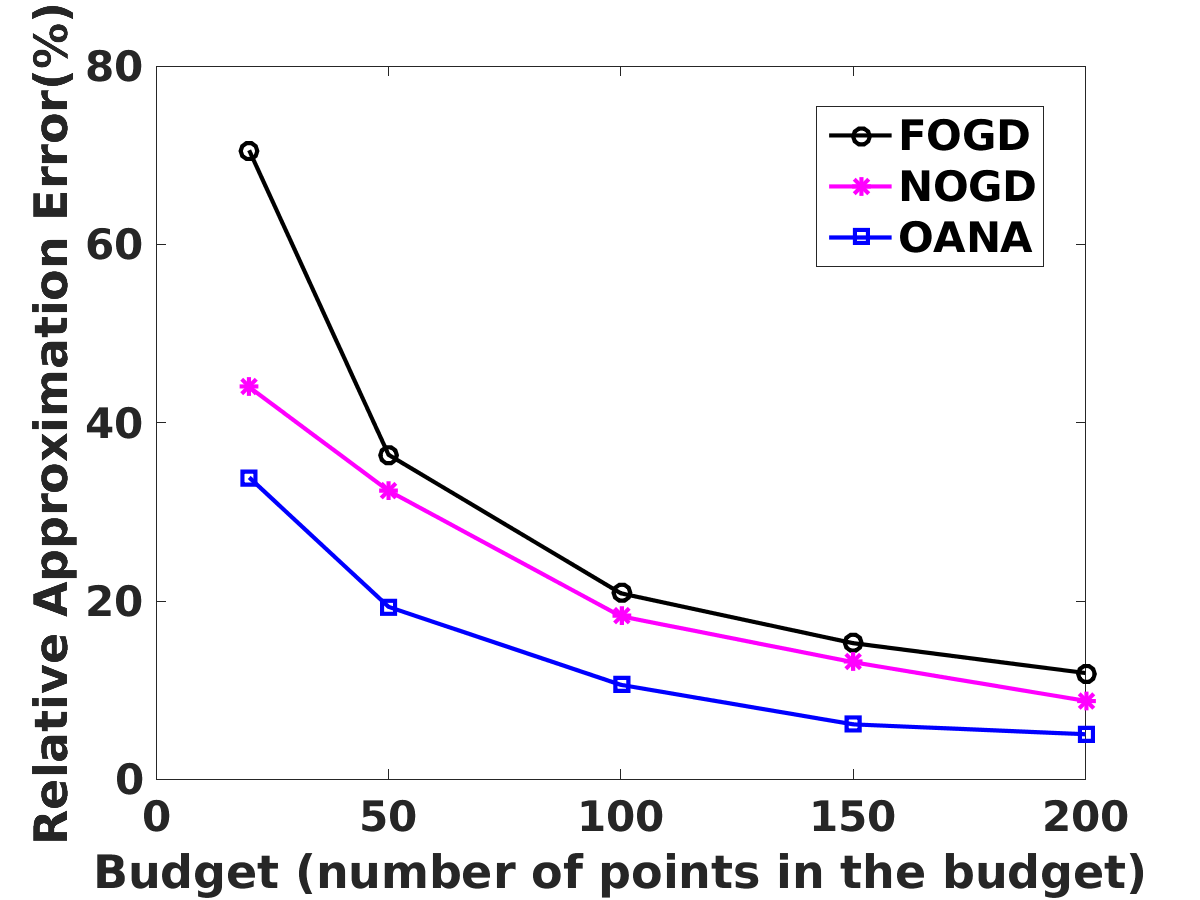}
    }
  \end{tabular}
  \caption{ The budget size (the number of landmark points $m$) versus relative kernel approximation error. For FOGD, we set the projection dimension to be $\frac{md+mr}{d}$ so that each method has the same budget size. }
  \label{fig:appfig}
\end{figure*}

Note that the importance of each term in Eq\eqref{thm:th2} depends on many factors, e.g., the kernel approximation quality, the frequency with which we update of the model $\bw$, and the update parameters. For example, if $B=\sqrt{T}$ and we choose the learning rate $\eta=\frac{1}{\sqrt{T}}$, then each component in the bound in Eq~\eqref{thm:th2} will have the same complexity, $O(\sqrt{T})$.

The difference between our regret bound in Eq(\ref{thm:th2}) and Theorem 2 in~\cite{nogd} is that we have one additional term to bound the change in model $\bw$ due to the changing landmarks. Meanwhile, as our method has lower approximation error (shown experimentally in Figure~\ref{fig:appfig} than NOGD, we have smaller error in the fourth term in Eq(\ref{thm:th2}) compared to the third term in Theorem 2 in ~\cite{nogd}. Furthermore, we have shown our method achieves higher accuracy than~\cite{nogd} in the experiments.

\section{Experiments}
\label{sec:exp}
In this section, we first show the quality of the nonlinear mapping based on the kernel approximation error. We then consider three important machine learning tasks: binary classification with hinge loss and logistic loss, and regression with square loss. In particular, we compare the following online learning methods:
\begin{enumerate}[noitemsep,nolistsep,leftmargin=*]
\item Passive-Aggressive (PA)\cite{Crammer:2006}: a linear online learning method that aims at fitting the current model to the latest example.
\item Fourier Online Gradient Descent (FOGD)\cite{nogd}: uses random Fourier features\cite{Rahimi:2007} in online gradient descent to update the model.
\item \nys Online Gradient Descent (NOGD)\cite{nogd}: uses nonlinear features from \nys method in online gradient descent to update the model.
\item Dual Space Gradient Descent (DualSGD)\cite{Le2016DualSG}: uses random Fourier feature to store the information of the removed data points to maintain the budget.
\item Reparametered Random Feature(RRF) \cite{ijcai2017}: learning the distribution parameters in random Fourier feature along with the online learning process.
\item Online Adaptive \Nys Approximation (OANA): our proposed online kernel approximation method that adaptively updates the landmark points as the online process advances.
\item Nonlinear Online Learning with Adaptive \Nys Approximation (NONALA): our proposed method that adaptively updates the landmark points and nonlinear mapping, and changes model according to the new mapping. 
\end{enumerate}

\begin{figure*}[tb]
\centering
\subfigure[cpusmall]{\includegraphics[width=.3\textwidth]{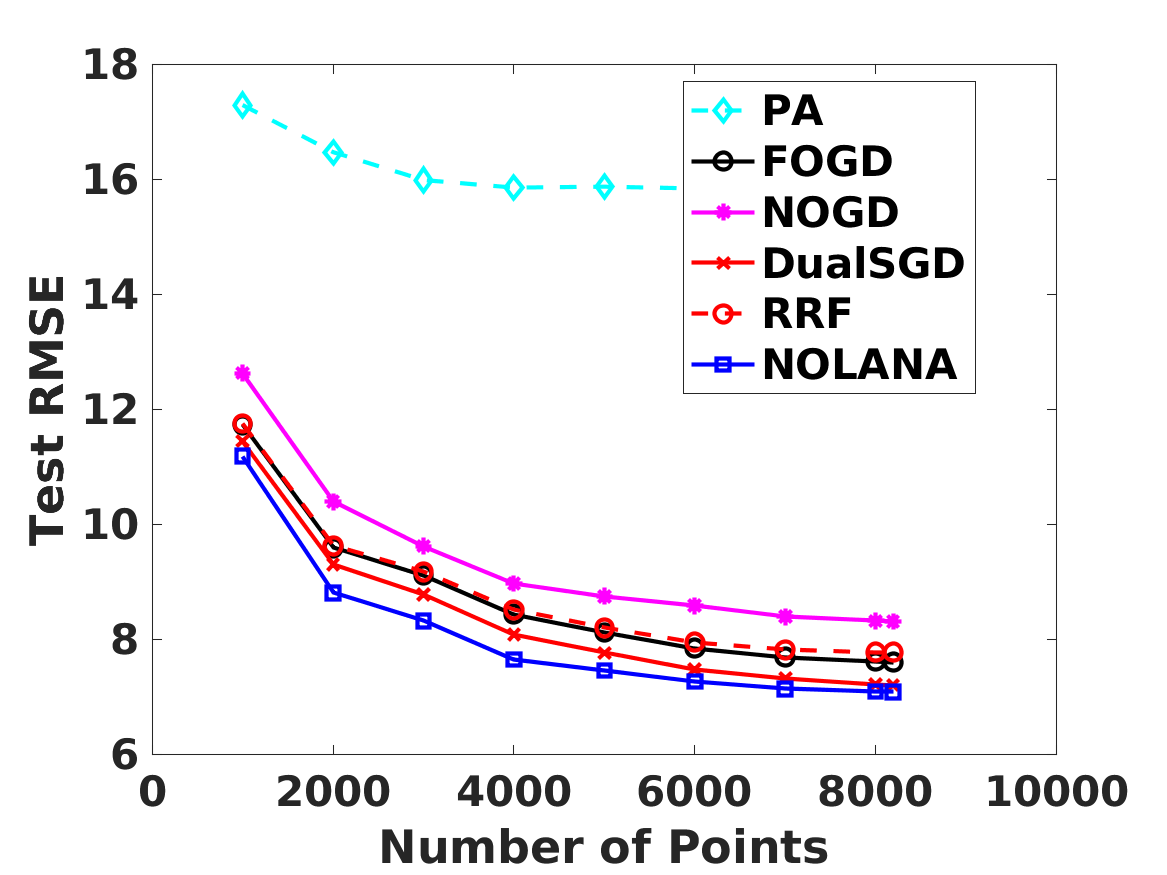}}\quad
\subfigure[usps]{\includegraphics[width=.3\textwidth]{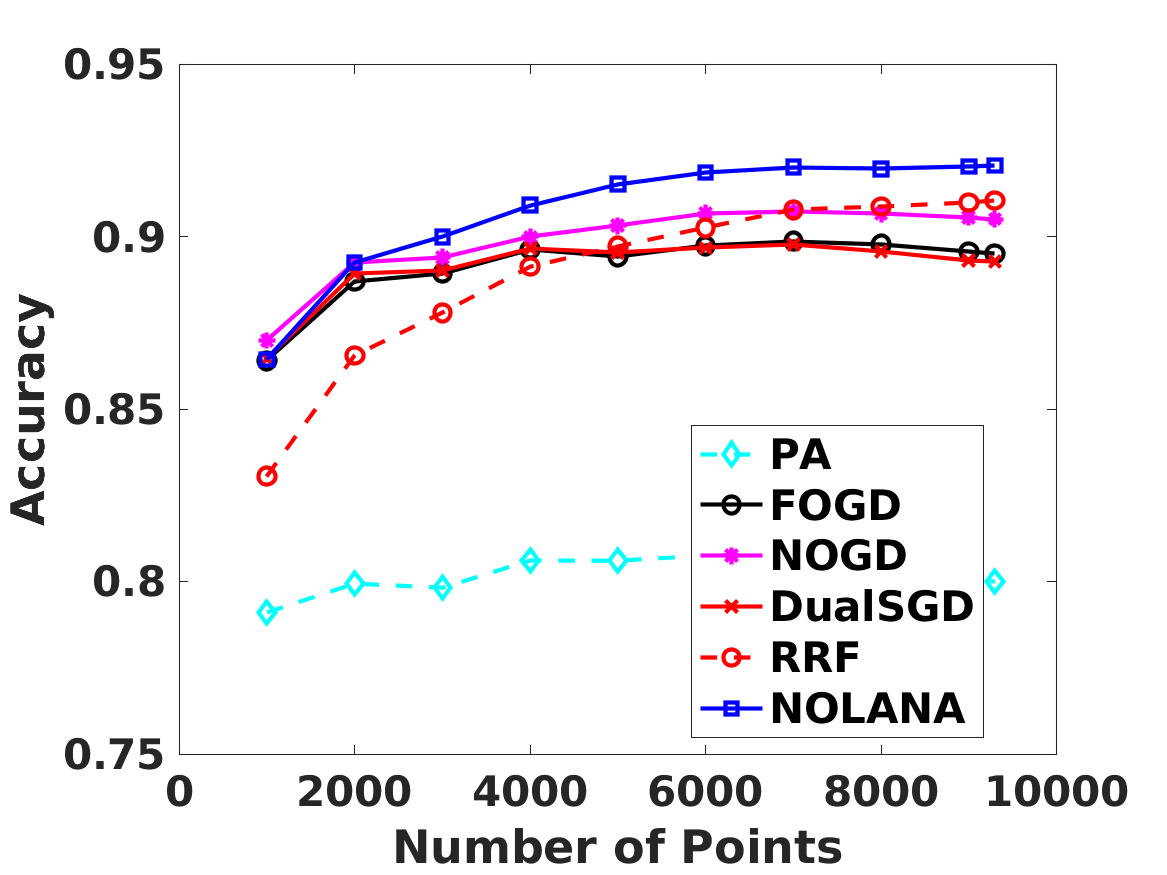}}\quad
\subfigure[ijcnn]{\includegraphics[width=.3\textwidth]{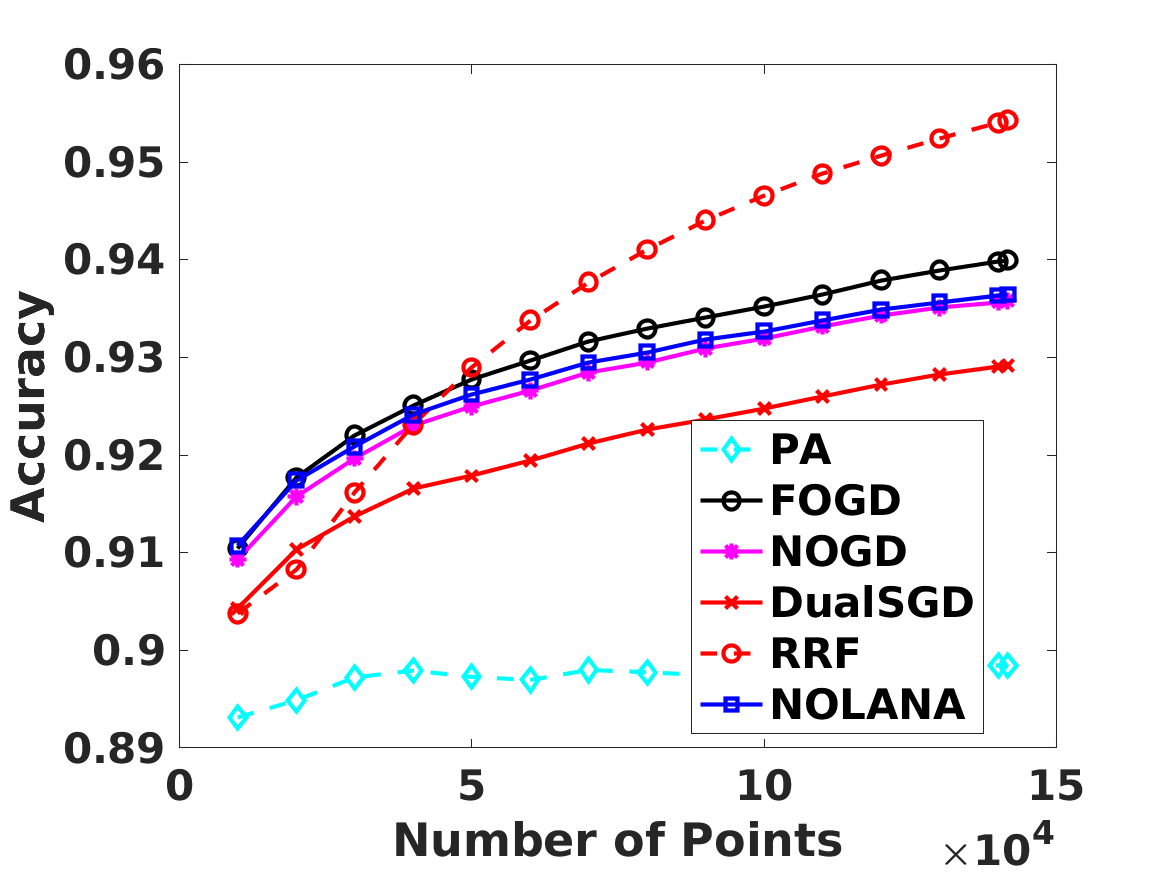}}
\medskip
\subfigure[webspam]{\includegraphics[width=.3\textwidth]{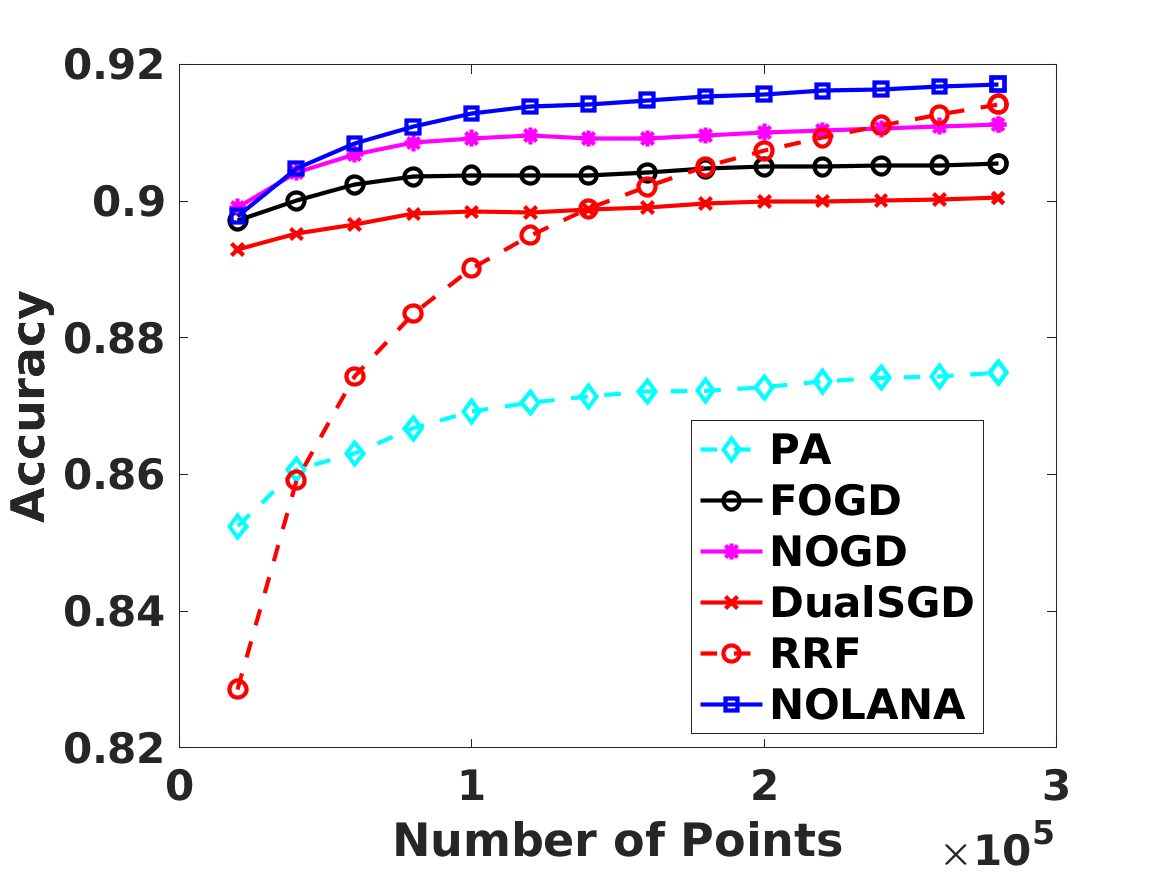}}\quad
\subfigure[covtype]{\includegraphics[width=.3\textwidth]{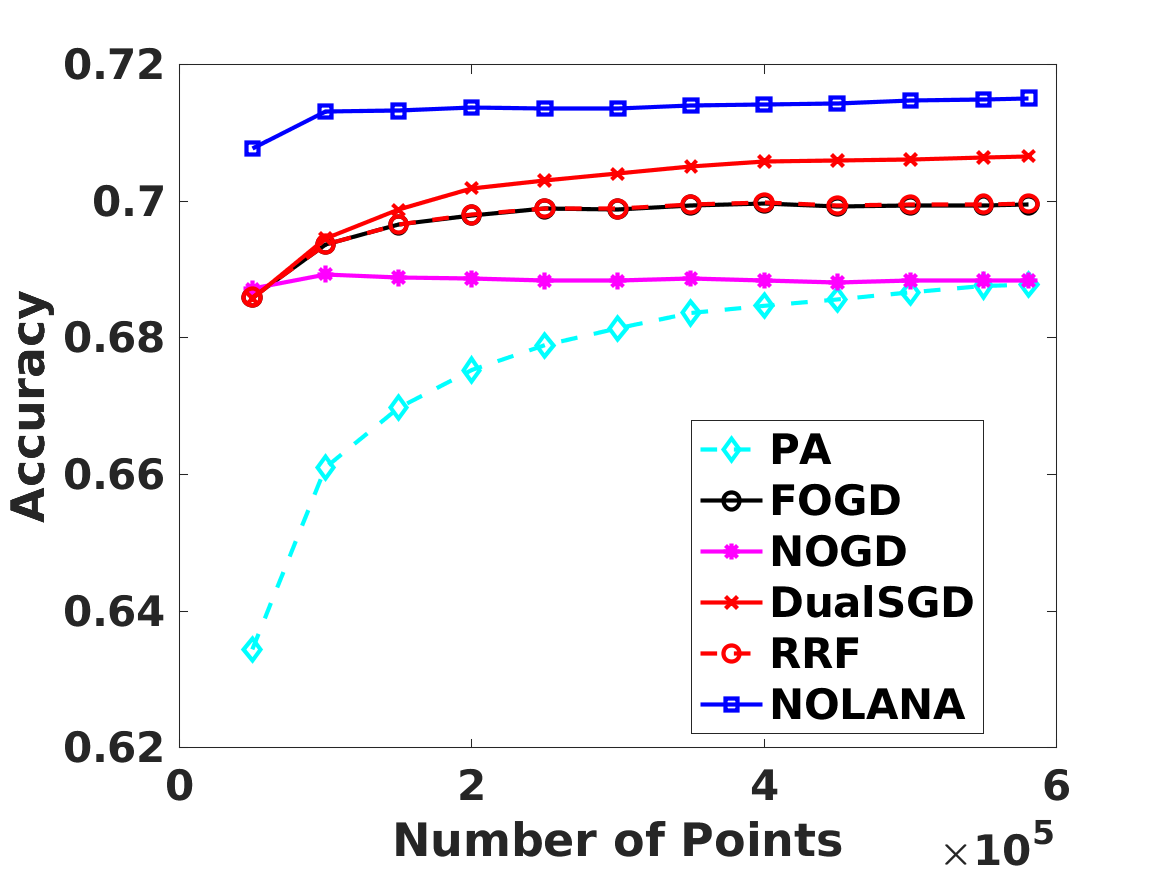}}
\vspace{-10pt}
\caption{ The number of training points vs accuracy (logistic loss and hinge loss) and Test RMSE (square loss). For FOGD, we use the projection dimension to be $\frac{md+mr}{d}$, so that each method has the same budget size. PA does not need to save any data point in the budget.}
\label{fig:mainfig}
\end{figure*}
  \vspace{10pt}
\begin{table*}[tb]
  \centering
  \caption{Accuracy comparison of our method with state-of-the-art online learning methods for online classification and regression. The results are TestRMSE result (on cpusmall dataset) and prediction accuracy (for the rest four datasets). NOLANA is our proposed method. $n$: number of data points; $d$: dimension of the data; $m$: number of data points in the budget set. Budget for all methods was kept the same (except PA) for each dataset.}
  \label{tab:method_compare}
\resizebox{18cm}{!}{
  \begin{tabular}{|c|r|r|r|r|r|r|r|r|r|r|r|r|}
    \hline\hline
    Dataset&$n$&$d$&$m$&Loss& task&PA &NOGD & FOGD& DualSGD& RRF&NOLANA \\
   \hline
    cpusmall&8,192&12&20&square& regression& 15.83 & 8.30 & 7.60 &7.20& 7.77 &{\bf 7.08} \\
    \hline
    usps&9,298&256&100&hinge&classification&80.01& 90.52 & 89.52  &89.30&91.05 &{\bf 92.07} \\
\hline
 	ijcnn &141,691&22&100&hinge&classification&89.84 &93.58&94.00 &92.92& {\bf 95.43} &93.65\\
    \hline
	webspam&280,000&254&100&hinge&classification&87.49 &91.13 & 90.55  &90.05 &91.42& {\bf 91.71} \\
	\hline
	covtype&581,012&54&200&logistic&classification&68.78 &68.84 & 69.95  &70.66 &69.97& {\bf 71.51} \\
	\hline\hline    
  \end{tabular}
  }
\end{table*}
For all the above methods, except for PA, which is a linear method, we use Gaussian kernel as the base kernel. All the experiments are conducted on a Ubuntu PC machine with an Intel Xeon X5440 2.83GHz CPU and 32G RAM with datasets from the UCI data repository and their statistics is shown in Table \ref{tab:method_compare}.

For each dataset, we keep the size of budget to be same for all the nonlinear methods, except for PA as PA does not need budget for data points. For NOGD and our method, we use the same $m$, i.e., the number of landmark points, and the budget size is $md+mr$ where $r$ is the reduced dimension ($r=0.8m$). For FOGD, we need to store the projection direction to construct random features. To ensure that FOGD uses the same budget as others, we set the projection dimension to be $\frac{md+mr}{d}$.  For DualSGD, which needs to store both projections and support vectors, we store $m$ support vectors, and the dimension of the projection is $\frac{mr}{d}$. Because \cite{Le2016DualSG,nogd} provides comprehensive comparison among DualSGD, NOGD, FOGD with other kernel based online learning methods, e.g., BPA~\cite{wang10b} and Projectron~\cite{Orabona:2008}, we ignore comparison with them here. Also we tested doubly Stochastic Functional Gradients (Doubly SGD)\citep{NIPS2014_5238}. The performance is worse than the methods compared in this paper. For instance, Doubly SGD achieves 75\% accuracy in usps dataset, while all the other methods achieves accuracy higher than 80\%. The reason is that Doubly SGD only updates one coordinate at a time for each data point, and thus needs several iterations over the entire dataset or a large amount of data points to achieve good performance. For online learning setting, we consider the data streaming and iterate the data just once.

\subsection{Kernel Approximation}
The kernel approximation results for varying budget sizes on three datasets are given in Figure~\ref{fig:appfig} to show how good the nonlinear mapping is. The three datasets' statistics are shown in Table~\ref{tab:method_compare}. For covtype, we randomly sampled 10,000 data points for kernel approximation experiments, and for the other two datasets we use all the data points. We use the relative kernel approximation error $\frac{\|G-\bar{G}\|_F}{\|G\|_F}$ to measure the quality. For our method, OANA, $\bar{G}$ is constructed using the landmark points after going through all the data points. For NOGD, $\bar{G}$ is constructed based on the first $m$ points in the data stream, and for FOGD, the dimension of the projected space is $\frac{md+mr}{d}$, so that all the three methods have the same budget size. As shown in the Figure~\ref{fig:appfig}, we can see that our method achieves lower approximation error than fixing the first $m$ points as landmark points (NOGD) and using random features as feature mapping (FOGD). Therefore, our method can generate better feature mapping, and benefit online learning tasks, such as classification and regression.

\subsection{Classification and Regression}
We compare our method with state-of-the-art online learning methods for solving online classification and regression problems. We test these methods on five datasets and three loss functions for classification and regression. For each dataset, we randomly shuffle the dataset for 5 times, and the result is the average of 5 independent runs. The results are shown in Table~\ref{tab:method_compare} and Figure~\ref{fig:mainfig}. For each dataset, we fix the budget to be the same for all the methods (except PA). The parameters are chosen by 5-fold cross-validation. Table~\ref{tab:method_compare} reports the accuracy for classification task (hinge loss and logistic loss) and testRMSE for regression task (square loss) after we process all the data points. We can see that in 4 out of 5 datasets, our method performs the best, showing adaptively learning landmark points is beneficial for online learning with \nys mapping. In Figure~\ref{fig:mainfig}, we plot the prediction accuracy or testRMSE for all the methods in response to the number of points observed so far. We can see that the models becomes better when training with more examples, with our method performing better than others in most cases.

We also want to mention that all the methods in Table~\ref{tab:method_compare} are online learning algorithms dealing with streaming data where we could go through the data just once. On the other hand, in the offline learning setting, the model is trained by going through data multiple times leading to higher accuracy than their online counterparts. Therefore our method and all the other online methods shown in the paper has lower accuracy than the commonly known offline results~\cite{HsuLibsvmTutorial2003} on the same datasets from the literature.

About the running time, to predict new data point, our method has the same prediction time with traditional \nys (NOGD) (as analyzed in Section 4.2). Therefore we do not introduce extra time overhead for prediction. For updating the model part, we need some extra time for updating the landmark points on-the-fly to achieve higher accuracy. Although both have the same time complexity for update, the total update time is more than NOGD in practice. We have shown the time comparison in Table~\ref{tab:eps} that there is a trade-off between the computation time and accuracy. More updates normally cost more time, but the model will have higher accuracy. On the other hand, when $\epsilon$ in Algorithm \ref{alg:main1} is 0, there is no landmark updates, and our method is the same as the \nys method. For example, the computation time (including time for prediction and updating the model) of our method is 5.75 sec with accuracy of 90.52\%. When $\epsilon$ is 50, our method takes more updates, and accuracy goes to 92.14\% taking longer time 20.57 sec. 

\section{Conclusion}
\label{sec:conclusion}
In this paper our goal was to improve \nys method and apply it for online learning tasks. The challenge is that we cannot apply state-of-the-art \nys methods under online learning setting, as they need to access the entire data to compute good landmark points for approximation. We propose to adaptively adjust the landmark points via online kmeans, followed by a two-stage optimization scheme to update the model through the online learning process. Extensive experiments indicate that our method performs better than state-of-the-art online learning algorithms for online classification and regression.
\bibliographystyle{ACM-Reference-Format}
\bibliography{sample}
\end{document}